\newtheorem{lemma}{Lemma}
\newtheorem{theorem}{Theorem}
\newtheorem{condition}{Condition}
\newcommand\numberthis{\addtocounter{equation}{1}\tag{\theequation}}
\title{Generalized Sliced Distances for Probability Distributions}
\date{}
\author{Soheil Kolouri$^1$\and
Kimia Nadjahi$^2$\and
Umut \c{S}im\c{s}ekli$^2$\and
Shahin Shahrampour$^3$\\\\
$^1$ HRL Laboratories, LLC, Malibu, CA, USA\\
$^2$ LTCI, T\'{e}l\'{e}com Paris, Institut Polytechnique de Paris, France \\
$^3$ Texas A\&M University, College Station, TX, USA}
\begin{document}
\maketitle

\def\R{\mathbb{R}}
\def\S{\mathbb{S}}
\def\E{\mathbb{E}}
\def\bP{\mathbb{P}}
\def\bQ{\mathbb{Q}}
\def\cF{\mathcal{F}}
\def\N{\mathcal{N}}
\def\cR{\mathcal{R}}
\def\cG{\mathcal{G}}
\def\cX{\mathcal{X}}
\def\cH{\mathcal{H}}
\def\cU{\mathcal{U}}

\begin{abstract}
Probability metrics have become an indispensable part of modern statistics and machine learning, and they play a quintessential role in various applications, including statistical hypothesis testing and generative modeling. However, in a practical setting, the convergence behavior of the algorithms built upon these distances have not been well established, except for a few specific cases. In this paper, we introduce a broad family of probability metrics, coined as Generalized Sliced Probability Metrics (GSPMs), that are deeply rooted in the generalized Radon transform. We first verify that GSPMs are metrics. Then, we identify a subset of GSPMs that are equivalent to maximum mean discrepancy (MMD) with novel positive definite kernels, which come with a unique geometric interpretation. Finally, by exploiting this connection, we consider GSPM-based gradient flows for generative modeling applications and show that under mild assumptions the gradient flow converges to the global optimum. We illustrate the utility of our approach on both real and synthetic problems. 
%
\end{abstract}

\section{Introduction}
\label{sec:introduction}

Measuring the discrepancy between probability distributions is at the heart of statistics and machine learning problems. A classic example in statistics is the hypothesis testing in higher dimensions, which has attracted a plethora of interest in recent years \cite{gretton2012kernel,ramdas2017wasserstein,chwialkowski2015fast}.  Similarly, in generative modeling,  leveraging probability metrics and discrepancy measures as an alternative to the adversarial networks, used in Generative Adversarial Networks (GANs), has become an exciting topic \cite{dziugaite2015training,mohamed2016learning,li2017mmd,arjovsky2017wasserstein}. Notably, variations of the Wasserstein distances and the Maximum Mean Discrepancy (MMD) have enjoyed ample attention from the community and have incited many enthralling works in the literature.

There are specific challenges with measuring the discrepancy between two high-dimensional probability distributions, including the high computational cost (e.g., for p-Wasserstein distances), and growing sample complexity, i.e., in the sense of the dependence of convergence rate of a given metric between a measure and its empirical counterpart on the number of samples \cite{genevay2019sample}. The community has tackled these challenges from different angles in recent years. One of the thought-provoking approaches is via slicing high-dimensional distributions over their one-dimensional marginals and comparing their marginal distributions \cite{kolouri2019sliced,nadjahi2019asymptotic}. The idea of slicing distributions is related to the Radon transform and has been successfully used in, for instance, sliced-Wasserstein distances in various applications \cite{rabin2011wasserstein,kolouri2016sliced,carriere2017sliced,deshpande2018generative,kolouri2018sliced,nadjahi2019approximate}. More recently, \citet{kolouri2019generalized} extended the idea of linear slices of distributions, used in sliced-Wasserstein distances, to non-linear slicing of high-dimensional distributions, which is rooted in the generalized Radon transform.

In this paper, we leverage the idea of slicing high-dimensional distributions and introduce a broad family of probability metrics named Generalized Sliced Probability Metrics (GSPMs). We provide a geometric interpretation of these metrics, and show their connection to the well-celebrated MMDs. GSPMs are built based on the idea of 'slicing' high-dimensional distributions, or the pushforward measure of the high-dimensional input distributions for a real function. We emphasize that GSPMs are \emph{not} a subclass of Integral Probability Measures (IPMs) \cite{muller1997integral,dziugaite2015training}, however, they share many commonalities.  We show that a subset of GSPMs is equivalent to MMDs, and leverage this connection to define geometrically interpretable kernels for MMDs that were not explored in the literature prior to this work. 

Finally, following the work of \citet{arbel2019maximum}, we identify some regularity conditions under which we show that the introduced kernels, which are rooted in GSPMs, satisfy the conditions for the global convergence of gradient flows. Hence, the proposed kernels are suitable for applications dealing with probability flows and implicit generative modeling.

\section{Preliminaries}
\label{sec:prelim}

Let $\mu$ and $\nu$ be probability measures defined on a measurable space, $\cX$, with corresponding densities $p$ and $q$. In addition, let $(\cX,d)$ denote a metric space. Let $\cF$ be a class of real-valued bounded measurable functions on $\cX$. Then the slice of a probability measure $\mu$, with respect to $f\in\cF$, is the pushforward measure $f_\#\mu$. We use the equivalent terminology that the slice of a $d$-dimensional probability density function $p$ ($d\geq2$), with respect to a function $f\in\cF$, is a one-dimensional probability density function that is defined as: 
\begin{eqnarray}
    p_f(\cdot)&=&\int_\cX \delta(\cdot-f)d\mu\nonumber\\
    &=&\int_\cX p(x)\delta(\cdot-f(x))dx
    \label{eq:gslice}
\end{eqnarray}
where $\delta$ is a one-dimensional Dirac function. Intuitively, $p_f$ is the distribution of $f(x)$ (which is scalar) when $x$s are i.i.d samples from $p$, $x\overset{i.i.d}{\sim} p$.

\subsection{Radon Transform}
In Radon transform, we are interested in the question of whether one can recover the distribution $p$ from its slices $\{p_f:\forall f\in \cF\}$. In other words, when does the set $\{p_f:\forall f\in \cF\}$ preserve the information contained in $p$?  

{\bf Classical Radon Transform:} Denote by $\S^{(d-1)}:=\{\theta:\|\theta\|_2=1\}$ the unit sphere in a $d$-dimensional Euclidean space. The classical Radon transform shows that when the function class is ``linear'', i.e.,  $\cF=\{f(x)=\langle x , \theta\rangle: \forall x\in\R^d, \forall\theta\in\S^{(d-1)}\}$, the corresponding slices, $\cR p:=\{p_{f_\theta} : \forall \theta\in\S^{(d-1)} \}$ contain all the required information to recover the distribution $p$. The previous statement implies that the classical Radon transform map is invertible, i.e. we have
\begin{eqnarray}
&{\small\text{forward~:~}}& p_{f_\theta}(t)=\int_\cX p(x)\delta(t-\langle x\cdot \theta\rangle)dx, \\
&{\small \text{Inverse~:~}}& p(x)=\int_{\S^{(d-1)}} (p_{f_\theta}*\eta)(\langle x,\theta\rangle)d\theta
\end{eqnarray}
for $\theta\in\S^{(d-1)}$ and $t\in\R$, where $\eta(\cdot)$ is a one-dimensional high-pass filter with a Fourier transform $\hat{\eta}(\omega) =  c|\omega|^{d-1}$, appearing as a result of the Fourier slice theorem.  The geometric interpretation of this process is that $p_{f_\theta}(t)$ integrates $p$ along the hyperplane $H=\{x: \langle x,\theta\rangle=t\}$.

{\bf Generalized Radon Transform:} Classical Radon transform can be extended to the generalized Radon transform (GRT) to integrate $p$ over hypersurfaces \textit{i.e.} $(d-1)$-dimensional manifolds, $H=\{x: \langle x,f_\theta(x)=t\}$. The literature on GRT focuses on parametric functions $f_\theta$ defined on $\cX \times \Omega_\theta$ with $\cX \subseteq \mathbb{R}^d$ and $\Omega_\theta\subseteq (\mathbb{R}^n \backslash \{ 0 \})$. These functions are so-called ``defining functions''. To ensure that GRT is invertible, the following necessary conditions are identified \cite{homan2017injectivity}:
\begin{enumerate}
    \item $f_\theta$ must be a real-valued $C^\infty$ function on $\cX \times \Omega_\theta$ to guarantee the smoothness of hyper-surfaces,
    \item $f_\theta$ must be homogeneous of degree one in $\theta$, \textit{i.e.},
        $\forall \lambda \in \R,\; f_{\lambda\theta} = \lambda f_\theta$. The condition is required to guarantee a unique parametrization of hypersurfaces,
    \item  $f_\theta$ must be non-degenerate in the sense that $\nabla_x f_\theta\neq 0$. The non-degenerate assumption ensures that the $(d-1)$-dimensional hypersurfaces  do not collapse to points, and the integrals are well defined, 
    \item The mixed Hessian of $f_\theta$ must be strictly positive, i.e., $det(\nabla_\theta\nabla_x f_\theta(x))>0$ for $\forall x \in \cX$, and $\forall \theta\in\Omega_\theta$. This condition is a local form of the Bolker's condition (See \cite{homan2017injectivity}), which allows one to locally identify $(x,\theta)$ with the covector $\frac{\nabla_x f_\theta(x)}{\|\nabla_x f_\theta(x)\|}$. 
\end{enumerate}
The linear function class $\cF:=\{f_\theta(x)=\langle x,\theta\rangle~:~\forall x \in \cX,~\forall \theta\in\S^{(d-1)} \}$ is one example of such family of ``defining functions''. Invertibility of GRTs is a long standing research problem. We provide below a number of well-studied classes of ``defining functions'', that ensure invertibility of GRTs.

In \cite{kuchment2006generalized}, it is shown that the circular defining function, $f_\theta(x) = \|x-s*\theta\|_2$ with $s\in\mathbb{R}^+$ and $\theta \in  \S^{d-1}$ provides an injective GRT. Homogeneous polynomials with an odd degree also define an injective GRT \cite{ehrenpreis2003universality}, \textit{i.e.}
    $f_\theta(x) = \sum_{|\alpha| = m} \theta_\alpha x^\alpha$,
where we use the multi-index notation $\alpha = (\alpha_1, \dots, \alpha_{d_\alpha}) \in \mathbb{N}^{d_\alpha}$, $|\alpha| = \sum_{i=1}^{d_\alpha} \alpha_i$, and $x^\alpha = \prod_{i=1}^{d_\alpha} x_i^{\alpha_i}$. 
The summation here iterates over all possible multi-indices $\alpha$, such that $|\alpha| = m$, where $m$ represents the  polynomial degree and $\theta_\alpha \in \mathbb{R}$. The parameter set for homogeneous polynomials is then set to be $\mathbb{S}^{d_\alpha-1}$. One can see that the choice of $m=1$ recovers the linear case $\langle x,\theta\rangle$, in that the set of the multi-indices with $|\alpha|=1$ becomes $\{ (\alpha_1, \dots, \alpha_d); \alpha_i = 1 \text{ for a single } i\in \llbracket 1, d \rrbracket, \text{ and } \alpha_j = 0, \quad \forall j \neq i\}$ and includes $d$ elements. We note that GRT was also the basis for the recently proposed generalized sliced-Wasserstein distances \cite{kolouri2019generalized}.

\section{Generalized Sliced Probability Metrics (GSPMs)}

In this section, we show that any probability metric between one-dimensional probability measures can be extended to higher-dimensions via the concept of generalized slicing. Let $\xi(\cdot,\cdot)$ be a metric for one-dimensional probability measures. Then, for probability measures $\mu$ and $\nu$ defined on $\cX\subset\R^d$ with respective densities $p$ and $q$, the proposed GSPM is defined as follows:
\begin{equation}
\zeta_\cF(p,q):=\left(\int_{\Omega_\theta} \xi^r(p_{f_\theta},q_{f_\theta}) d\theta\right)^{\frac{1}{r}}
\label{eq:gspm}
\end{equation}
where $r\geq 1$. Let us first show that GSPM is a metric. 
Non-negativity and symmetry immediately follow from non-negativity and symmetry of $\xi(\cdot,\cdot)$, while triangle inequality follows from the Minkowski inequality:
\begin{eqnarray*}
\zeta_\cF(p,q)&=&\left( \int_{\Omega_\theta} \xi^r(p_{f_\theta},q_{f_\theta}) d\theta\right)^{\frac{1}{r}} \\
&\leq&\left( \int_{\Omega_\theta} \left( \xi(p_{f_\theta},h_{f_\theta})+\xi(h_{f_\theta},q_{f_\theta})\right)^r d\theta\right)^{\frac{1}{r}} \\
&\leq& 
\left( \int_{\Omega_\theta}  \xi^r(p_{f_\theta},h_{f_\theta})d\theta\right)^{\frac{1}{r}}+ \\ && \left(\int_{\Omega_\theta} \xi^r(h_{f_\theta},q_{f_\theta})d\theta\right)^{\frac{1}{r}}\\
&=& \zeta_\cF(p,h)+\zeta_\cF(h,q)
\end{eqnarray*}
Finally, the identity of indiscernibles states that, $\zeta_\cF(p,q)=0$ if and only if (iff) $p=q$. The forward proof is straightforward: $p=q$ results in $p_{f_\theta}=q_{f_\theta}$ and since $\xi$ is a metric $\xi(p_{f_\theta},q_{f_\theta})=0$ for $\theta\in\Omega_\theta$. If $p_{f_\theta}=q_{f_\theta}$  for $\theta\in\Omega_\theta$, we can conclude that $p=q$ iff the GRT is injective. Hence, if GRT is injective then GSPMs provide a metric. Otherwise, GSPMs are pseudo-metrics.  

\subsection{Max-GSPM}
 Equation \eqref{eq:gspm} is based on the expected value of $\xi^r(p_{f_\theta},q_{f_\theta})$, when $\theta\sim \cU_{\Omega_\theta}$ where $\cU_{\Omega_\theta}$ is the uniform distribution on $\Omega_\theta$. Here we show that the max version of GSPMs are also metrics. Substituting the expected value with supremum, leads to a metric defined as:
\begin{equation}
\zeta^*_\cF(p,q)=\left(\operatorname{sup}_{\theta\in\Omega_\theta} ~\xi^r(p_{f_\theta},q_{f_\theta}) \right)^{\frac{1}{r}}
\label{eq:maxgspm}
\end{equation}
Verifying the metric properties for Eq. \eqref{eq:maxgspm} is trivial, given the properties of $\xi$ (see the supplementary material). Note that the recently proposed distances like Sliced Wasserstein (SW) distances and max-SW distances are a special case of GSPMs and Max-GSPMs. 

\section{GSPMs and MMDs} 

\label{sec:gspm_mmd}

The seminal work by \citet{gretton2007kernel,gretton2012kernel} on {\it maximum mean discrepancy} (MMD) provides a framework for efficient comparison of probability distributions. MMD is an {\it integral probability metric} \cite{sejdinovic2013equivalence}, and has become a popular choice of comparison between distributions in a wide variety of applications, e.g., generative modeling \cite{li2017mmd,tolstikhin2018wasserstein}, and gradient flows \cite{arbel2019maximum}. In practice, MMD is defined with  respect to a Reproducing Kernel Hilbert Space (RKHS), with a unique kernel. Like other kernel methods, the choice of kernel is often an application-dependent choice. In what follows, we show that an interesting family of GSPMs could be related to MMDs. Notably, we combine generalized slices together with a specific family of distances, which both have clear geometric interpretations, and obtain MMDs with well-defined kernels.

Consider Equation \eqref{eq:gspm} for the special case of $\xi(p_{f_\theta},q_{f_\theta})=\|Ap_{f_\theta}-Aq_{f_\theta}\|_2$ and $r=2$, where $A$ is a positive(-definite) linear operator. The positive assumption enforces $\xi$ to be a norm (i.e., the weighted Euclidean norm).  If $A$ is positive semi-definite, then $\xi$ would become a pseudo-metric, and as a consequence $\zeta_\cF$ also becomes a pseudo-metric. Given a linear operator, $A$, we can write: 
\begin{equation}
\zeta^2_\cF(p,q)=\int_{\Omega_\theta} \|Ap_{f_\theta}-Aq_{f_\theta}\|_2^2d\theta
\label{eq:lin}
\end{equation}

We focus on practical settings where we only observe samples $\{x_i\sim p\}_{i=1}^N$ and $\{y_j\sim q\}_{j=1}^M$ from these distributions. Substituting the empirical distribution in Equation \eqref{eq:gslice} give us the empirical slices as
$\hat{p}_{f_\theta}(t)=\frac{1}{N}\sum_{i=1}^N \delta(t-f_\theta(x_i))$ and $\hat{q}_{f_\theta}(t)=\frac{1}{M}\sum_{j=1}^M \delta(t-f_\theta(y_j))$. Using a common trick-of-trade in statistics, and without the loss of generality, we consider a smoothened version of the empirical slices via a {\it radial basis function} (RBF), $\phi_\sigma$, where $\sigma$ identifies the radius of the RBF ($\phi_{\sigma=0}(\cdot)=\delta(\cdot)$). Note that using $\phi_{\sigma}$ is equivalent to assuming smoothness priors on the slices.

By plugging in the (smoothened) empirical sliced distributions into \eqref{eq:lin}, we obtain:
{\small
\begin{eqnarray}
&\hspace{-2.5in}\zeta^2_\cF(\hat{p},\hat{q})= \nonumber\\
&\frac{1}{N^2}\sum_{ij} \underbrace{\int_{\Omega_\theta}\langle A\phi_\sigma(\cdot-f_\theta(x_i)),A\phi_\sigma(\cdot-f_\theta(x_j))\rangle d\theta}_{k(x_i,x_j)}+\nonumber\\
&\frac{1}{M^2}\sum_{ij} \underbrace{\int_{\Omega_\theta}\langle A\phi_\sigma(\cdot-f_\theta(y_i)),A\phi_\sigma(\cdot-f_\theta(y_j))\rangle d\theta}_{k(y_i,y_j)}-\nonumber\\
&\frac{2}{MN}\sum_{ij} \underbrace{\int_{\Omega_\theta}\langle A\phi_\sigma(\cdot-f_\theta(x_i)),A\phi_\sigma(\cdot-f_\theta(y_j))\rangle d\theta}_{k(x_i,y_j)} 
\label{eq:gspm_kernel}
\end{eqnarray}}
Equation \eqref{eq:gspm_kernel} is also the squared MMD with the particular kernel shown there-in.  Note that one can use the Monte-Carlo integral approximation to obtain an algorithmic way of calculating the kernel for any feasible $\cF$, $\phi_\sigma$, and $A$.  

We now argue that these family of kernels are positive definite (PD). Indeed,
\begin{equation}
    k_\theta(x_i,x_j):=\langle A\phi_\sigma(\cdot-f_\theta(x_i)),A\phi_\sigma(\cdot-f_\theta(x_j))\rangle
    \label{eq:k_theta}
\end{equation}
is a dot-product kernel, which is by definition PD, and summation/integration of PD kernels results in a PD kernel. Therefore,
\begin{equation}
    k(x_i,x_j):=\int_{\Omega_\theta} k_\theta(x_i,x_j) d\theta
    \label{eq:kernel}
\end{equation}
is a PD kernel. Below, we study some special interesting cases of the GSPMs based on $\xi(p_{f_\theta},q_{f_\theta})
=\|Ap_{f_\theta}-Aq_{f_\theta}\|_2$, and their equivalent MMD form based on kernels.

\subsection{First example: $A=id(\cdot)$}

When $A=id(\cdot)$, the GSPM is a generalized-sliced $\ell_2$ distance between the two distributions. This subsection shows that the work of \cite{knop2018cramerwold} follows this setting. In addition, we demonstrate that while such generalized sliced $\ell_2$ distance might not be as interesting for $\cF=\{f|f(\cdot)=\langle \cdot,\theta \rangle,~\forall\theta\in\S^{d-1}\}$, from a geometric point of view, it becomes appealing for more complex family of slices (e.g., homogeneous polynomials). 

Assuming the RBF is a Gaussian, $\phi_\sigma(t)=\N(0,\frac{\sigma}{2})(t)$ and using the inner product between two Gaussians, one can show that the dot-product kernel in Eq.~\eqref{eq:k_theta} boils down to:
\begin{equation}
    k_\theta(x_i,x_j)= \N(f_\theta(x_i)-f_\theta(x_j),
    \sigma)(0)
    \label{eq:k_theta_gaussian}
\end{equation}
The geometric interpretation of Equation \eqref{eq:k_theta_gaussian} is quite interesting. First note that $f_\theta:\cX\rightarrow \R$ therefore, the pre-image of a scalar in the range of $f_\theta$ is a \emph{hyper-surface} in $\cX$. This means that all points living on a hyper-surface would be projected to the same scalar in the range of $f_\theta$ (i.e., iso-hyper-surface). Therefore, while $x_i$ and $x_j$ could be far away from one another (in a Euclidean sense), as long as they live on the same or nearby iso-hyper-surfaces they will considered to be similar (with respect to $f_\theta$). Figure \ref{fig:hypersurfaces} demonstrates this effect and shows different $f_\theta$s, from family of linear functions parameterized by $\theta$ on a unit sphere (a), and family of polynomials of degree 5 (b), for which samples $x_i,x_j\in \R^2$ are considered near-by/far-away. 

\begin{figure}[t!]
    \centering
    \includegraphics[width=\linewidth]{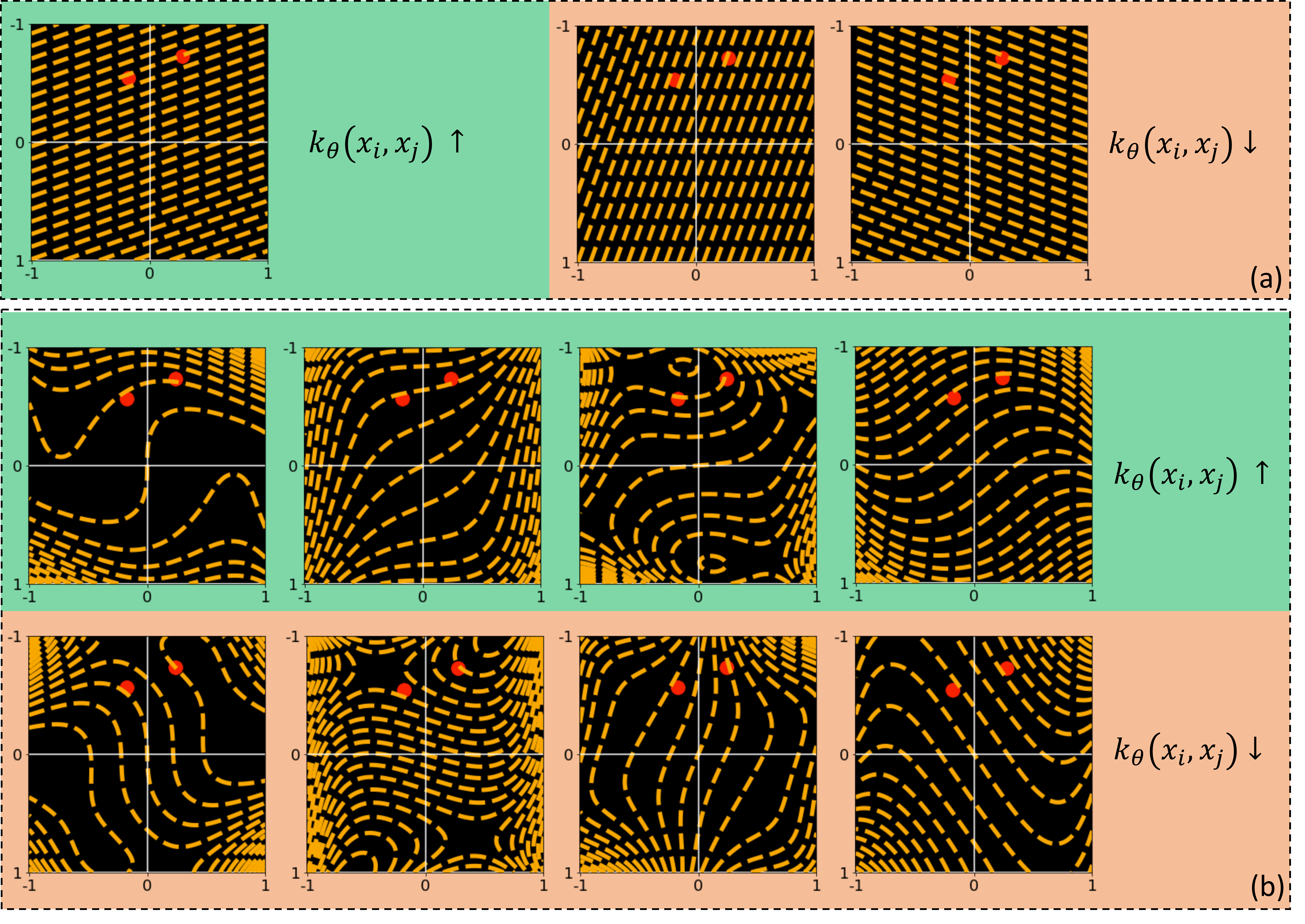}
    \caption{Visualization of two points $x_i$ and $x_j$ (red points), and the iso-hyper-surfaces (in 2D iso-curves) for sample $f_\theta\in\cF$, for $\cF=\{f|f(x)=\langle x,\theta\rangle,~\forall\theta\in\S^{d-1}\}$ (a) and $\cF=\{f|f(x)=\sum_{|\alpha|=5} \theta_\alpha x^\alpha,~\forall \theta\in\Omega_\theta\}$ (b). The green color indicates when $k_\theta(x_i,x_j)$ is high and the red color indicates when it is low.}
    \label{fig:hypersurfaces}
\end{figure}

As a special case, \citet{knop2018cramerwold} used linear slices (i.e., $f_\theta(x)=\langle x,\theta\rangle$) and showed that when $\phi_\sigma$ is the Gaussian function, then Equation \eqref{eq:kernel} has a closed form: 
$$
k(x_i,x_j)=\frac{1}{\sqrt{2\pi\sigma}}\psi_d(\frac{\|x_i-x_j\|^2_2}{2\sigma})
$$
where $\psi_d(\cdot)$ is a Kummer’s confluent hypergeometric function \cite{barnard1998gram} and can be approximated as:
\begin{equation}
    k(x_i,y_j)\approx\frac{1}{\sqrt{2\pi\sigma}}(1+\frac{\|x_i-y_j\|^2_2}{\sigma(d-\frac{3}{2})})^{-\frac{1}{2}}.
    \label{eq:cw}
\end{equation}

The above kernel also holds when $A$ is the Fourier transform, which is due to the fact that the Fourier transform is a unitary linear operator, i.e. satisfies $\langle A\phi_\sigma(\cdot-f_\theta(x_i)),A\phi_\sigma(\cdot-f_\theta(x_i))\rangle=\langle \phi_\sigma(\cdot-f_\theta(x_i)),\phi_\sigma(\cdot-f_\theta(x_i))\rangle$. However, note that the Fourier transform of a PDF is the characteristic function. Therefore, one would be considering L2-norm squared of the characteristics functions of the slices.

Recall that in these derivations, we started by fixing a slicing operation (linear slices), and used a specific distance, i.e. $\ell_2$ distance, and that we know the geometric meaning of both of these steps and their implications. Then, we ended up with a novel PD kernel that defines a MMD, which inherits these geometric properties. Here we emphasize that the distance used here (and in \cite{knop2018cramerwold}) is a Sliced-$\ell_2$. In the next section, we study the specific case of the Generalized-Sliced-Cram\'{e}r distance.  


\subsection{Second example: $A$ is the cumulative integral operator}

Now, we choose $A$ as the cumulative integral operator:
$$ Ap_{f_\theta}(t):=\int_{-\infty}^t p_{f_\theta}(\tau)d\tau.$$
Note that such $A$ is a positive definite operator. In this setting, the distance $\xi(p_{f_\theta},q_{f_\theta})=\|Ap_{f_\theta}-Aq_{f_\theta}\|_2$ is the 2-Cram\'{e}r distance \cite{Cramer1928composition} between the two one-dimensional probability distributions, $p_{f_\theta}$ and $q_{f_\theta}$, which is recently used in various publications \cite{bellemare2017cramer,kolouri2020sliced}. The Cram\'er distance shares some common characteristics to those of the Wasserstein distances. In fact, the 1-Cram\'er distance and the 1-Wasserstein distance are equivalent.  It is straightforward to show that $$k_\theta(x_i,x_j)=\langle A\phi_\sigma(\cdot-f_\theta(x_i)),A\phi_\sigma(\cdot-f_\theta(x_j))\rangle$$
is unbounded. Note that $A\phi_{\sigma}$ is the CDF of an RBF, and therefore its integral is unbounded. However, assuming that the integral domain is $[-T,T]$, we can find closed form solutions for $k_\theta(\cdot,\cdot)$. For instance, for $\phi_{\sigma=0}(\cdot)=\delta(\cdot)$ we have that $A\phi_0$ is a step function and, $$k_\theta(x_i,x_j)=T-\max(f_\theta(x_i),f_\theta(x_j)).$$

The boundedness assumption enforces us to use kernels $\phi_\sigma$ with a bounded range (hence, Gaussian kernels won't be allowed in this setting).  Our experiments indicate that smoothstep functions, often used in computer graphics, are well-suited candidates for $A\phi_\sigma$. The $n$'th order smoothstep function is defined as:
\begin{align*}
&A\phi_\sigma(x)=\\ &
 \left\{\begin{array}{lr}
     0 & x\leq-\sigma\\
    { \small \sum_{k=0}^n (-1)^k{n+k \choose k}{2n+1 \choose n-k}(\frac{x+\sigma}{2\sigma})^{n+k+1}}& |x|<\sigma\\
     1 & x\geq\sigma\\
\end{array}\right.   
\end{align*}
We include the derivations of $k_\theta(\cdot,\cdot)$, with the smoothstep functions, in the supplementary material. 


\subsection{Third example: $A$ is a generic integral transform} 
Integral transforms provide a broad family of linear operators, which could be used in Equation \eqref{eq:lin} to define novel distances/pseudo-distances (depending on the invertibility of the transform).  
The integral transform of a function, $\phi:\mathbb{R}\rightarrow\mathbb{R}$, is a generic linear transform defined as:
\begin{eqnarray}
A\phi(\cdot)=\int_{-\infty}^\infty \phi(x)\eta(x,\cdot)dx
\end{eqnarray}
where $\eta(x,z)$ is the integral kernel or the nucleus of the transform.  In this work, we suffice to mention this family of linear operators as an interesting class of operators for further studies. 

\section{GSPM Gradient Flows}

Gradient flows have become increasingly popular in implicit generative modeling \cite{csimcsekli2018sliced,arbel2019maximum,kolouri2019generalized}, where the aim is to minimize a functional in the Wasserstein space (i.e., the space of probability measures with bounded second-order moments, metrized by the Wasserstein-2 metric), given as follows:
\begin{align}
p^\star = \arg\min_p \zeta_{\mathcal{F}}^2(p,q). \label{eqn:optim}
\end{align}
In this section, we will exploit the connections that we developed between GSPMs and MMD (as detailed in Section~\ref{sec:gspm_mmd}) and develop a \emph{globally convergent} algorithm for solving problems of the form of \eqref{eqn:optim} by building up on the recent theoretical results given in \cite{arbel2019maximum}.

We now present the \emph{GSPM-flows}, that aim at generating a path of measures $(p_t)_{t\geq 0}$ which minimizes the squared GSPM between an initial measure $p_0$ and a target measure $q$ as $t$ goes to infinity. In particular, we will consider the gradient flow, informally expressed as follows:
\begin{align}
    \partial_t p_t = \nabla_{\mathcal{W}} \frac1{2}\zeta_{\mathcal{F}}^2(p_t,q), 
    \label{eqn:gf}
\end{align}
where $\nabla_{\mathcal{W}}$ denotes a notion of a gradient in the Wasserstein space \cite{ambrosio2008gradient}. Such gradient flows are of particular interest for generative modeling, since if the solution paths of the flow can be shown to converge to the global optimum $p^\star$,
then one can approximately simulate the gradient flow in order to solve the minimization problem and estimate $p^\star$.

Under appropriate conditions \cite{ambrosio2008gradient}, a path $(p_t)_{t \geq 0}$ is a solution of \eqref{eqn:gf} if and only if it solves a continuity equation of the form: 
\begin{align}
\partial_t p_t + \mathrm{div}(v p_t) = 0,    \label{eqn:cont}
\end{align}
$\mathrm{div}$ denotes the divergence operator and $v$ is a vector field, given as follows: \cite{arbel2019maximum}
\begin{align*}
    v(x,p) = - \nabla_x \Biggl( \int k(z,x) q(z) dz - \int k(z,x) p(z) dz \Biggr),
\end{align*}
where $k$ is defined in \eqref{eq:kernel}.

The partial differential equation representation \eqref{eqn:cont} has important practical implications, since such PDEs are often associated with a McKean-Vlasov (MV) process \cite{bogachev2015fokker}, which can be used for developing practical algorithms. In particular, associated to the continuity equation, we can define a MV process $(X_t)_{t\geq 0}$ as a solution to the following differential equation:
\begin{align}
    dX_t= v(X_t ,p_t)dt, \qquad   X_0 \sim p_0, \label{eqn:mv_proc}
\end{align}
where $X_t$ denotes the state of the process at time $t$. Here, $X_t$ evolves through the \emph{drift} function $v$, which requires the knowledge of $p_t$, i.e., the density function of $X_t$. The interest in this process is that the probability density functions of $(X_t)_t$ solve the continuity equation, hence, solving the optimization problem \eqref{eqn:optim} reduces to simulating \eqref{eqn:mv_proc}. 

Unfortunately, exact simulation of \eqref{eqn:mv_proc} is often intractable due to (i) the process is continuous-time, it needs to be discretized, (ii) the drift depends on the density $p_t$, which is not available in general. 
We will focus on the discretization of the process first, then we will develop a particle-based approach to alleviate the second problem. 

In order to discretize \eqref{eqn:mv_proc}, we consider the noisy Euler-Maruyama scheme, proposed in \cite{arbel2019maximum}, given as follows:
\begin{align}
    X_{n+1} = X_{n} + \eta v(X_n + \beta_n U_n, p_n), \label{eqn:em}
\end{align}
where $\eta>0$ is a step-size, $n=0,1,2,\dots$ denotes the iterations, $p_n$ denotes the density of $X_n$, $\beta_n >0$ denotes an inverse temperature variable, and $U_n$ is a standard Gaussian variable. If $\beta_n =0$ for all $n$, this scheme reduces to the standard Euler-Maruyama discretization, whereas a positive $\beta_n$ would drive the scheme to explore the space in a more efficient way. 


As one of our main contributions, we will now identify sufficient regularity conditions on the defining function $f_\theta$ and the smoothing function $\phi_\sigma$, which will be required for the convergence analysis of the gradient flow and its discretization \eqref{eqn:em}. 
\begin{condition}
\label{cond:c1}
    $A$ is a linear, bounded, positive semi-definite operator with the corresponding operator norm $\|A\|_{op}$.
\end{condition}

\begin{condition}
\label{cond:c2}
There exists a constant $G_f$, such that (for any $\theta\in \Omega_\theta$) $\|\nabla f_\theta(x)\|\leq G_f$ for all $x\in \cX$ and 
\begin{align}
\|\nabla f_\theta(x)-\nabla f_\theta(y)\|\leq G_f\|x-y\|
\end{align}
for all $x,y\in \cX$.
\end{condition}

\begin{condition}
\label{cond:c3}
There exists a constant $G_\phi$, such that the following inequalities hold:
 $|\phi_\sigma(\cdot)|\leq G_\phi$, $|\phi'_\sigma(\cdot)|\leq G_\phi$, $|\phi_\sigma(t)-\phi_\sigma(t')|\leq G_\phi|t-t'|$, and $|\phi'_\sigma(t)-\phi'_\sigma(t')|\leq G_\phi|t-t'|$.
\end{condition}

We now present our main result.
\begin{theorem}
\label{thm:main}
Let $p_0$ be a distribution with finite second-order moment. Then, under Conditions~\ref{cond:c1},\ref{cond:c2},\ref{cond:c3}, there exists a unique $(X_t)_{t\geq 0}$ solving \eqref{eqn:mv_proc} such that the density functions of $(X_t)_{t \geq 0}$ constitute the unique solution of \eqref{eqn:cont}.

Furthermore, let $(X_n)_{n\in\mathbb{N}_+}$ be the iterates obtained by \eqref{eqn:em}. If $\sum_{i=1}^n \beta_i^2 \to \infty$ as $n \to \infty$, then the following bound holds:
\begin{align}
\zeta(p_n,q) \leq \zeta(p_0,q)e^{-2 \lambda^{2} \eta(1-3 \eta L) \sum_{i=0}^{n} \beta_{i}^{2}},  
\end{align}
where $p_n$ denotes the density of $X_n$ and
\begin{align*}
    L&=(G^2_f+G_f)G^2_\phi\|A\|^2_{op}\\
    \lambda&=\Bigl(2d\|A\|^2_{op}G^2_\phi G^2_f(1+G^2_f)\Bigr)^{1/2}.
\end{align*}
\end{theorem}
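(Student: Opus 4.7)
The overall plan is to leverage the equivalence, established in Section~\ref{sec:gspm_mmd}, between GSPMs (with positive semi-definite linear $A$ and $r=2$) and MMD with kernel $k$ defined in \eqref{eq:kernel}. Once this equivalence is invoked, Theorem~\ref{thm:main} reduces to (i) showing that our kernel $k$ satisfies the regularity hypotheses required by the convergence theory of \citet{arbel2019maximum} for MMD gradient flows, and then (ii) extracting the specific constants $L$ and $\lambda$ from the defining function $f_\theta$, smoother $\phi_\sigma$, and operator $A$.

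First, I would derive bounds on $k_\theta$ and its spatial derivatives. Applying the chain rule to \eqref{eq:k_theta} gives
\[
\nabla_x k_\theta(x,y) = -\nabla_x f_\theta(x)\,\langle A\phi'_\sigma(\cdot-f_\theta(x)),\, A\phi_\sigma(\cdot-f_\theta(y))\rangle,
\]
whose norm is controlled by $G_f G_\phi^2 \|A\|_{op}^2$ via Conditions~\ref{cond:c1},\ref{cond:c2},\ref{cond:c3}. Computing the mixed and pure Hessians and combining the Lipschitz bounds on $\nabla f_\theta$ (Condition~\ref{cond:c2}) with those on $\phi'_\sigma$ (Condition~\ref{cond:c3}) yields that $\nabla_x k_\theta$ is $((G_f^2+G_f)G_\phi^2\|A\|_{op}^2)$-Lipschitz. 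Integrating over $\Omega_\theta$ preserves these bounds by the triangle inequality, so $k$ inherits an $L$-Lipschitz gradient with $L=(G_f^2+G_f)G_\phi^2\|A\|_{op}^2$ as stated.

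Second, for existence/uniqueness of the McKean--Vlasov process $(X_t)_{t\geq 0}$ and the continuity equation solution $(p_t)_{t\geq 0}$, I would appeal to standard well-posedness results. Since $v(x,p)$ is an integral of $\nabla_x k(z,x)$ against $q-p$, the bounds above imply that $v$ is bounded and Lipschitz jointly in $x$ and in $p$ with respect to the $W_2$ metric, and has at most linear growth. Classical results (e.g., Sznitman-type arguments) then produce a unique strong solution, and the theory of \citet{ambrosio2008gradient} for continuity equations matches its law with the unique solution of \eqref{eqn:cont}.

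Finally, for the exponential decay rate, I would follow the Lyapunov-style analysis of \citet{arbel2019maximum}: differentiate $\zeta^2(p_n,q)$ along one step of \eqref{eqn:em}, rewrite the one-step decrease using the MMD structure as a weighted norm of the witness function's gradient, and exploit the Gaussian perturbation $\beta_n U_n$ together with a Stein/Poincar\'e-type inequality to lower bound this decrease in terms of $\zeta^2(p_n,q)$ itself. The constant $\lambda$ then emerges from controlling $\mathbb{E}\|\nabla_x k(\cdot,X)\|^2$ by $(G_f G_\phi \|A\|_{op})^2$ together with a dimension factor, yielding $\lambda^2 = 2d\|A\|_{op}^2 G_\phi^2 G_f^2(1+G_f^2)$; iterating the resulting one-step contraction produces the claimed bound. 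I expect the main obstacles to be (i) tracking constants carefully through the chain-rule computations so they match $L$ and $\lambda$ exactly rather than being looser, and (ii) adapting the noise-injection argument to our kernel family, specifically verifying that the schedule $(\beta_n)$ with $\sum \beta_i^2 \to \infty$ suffices to escape spurious stationary points despite the possibly nonlinear, GRT-based structure of $f_\theta$.
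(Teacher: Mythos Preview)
Your plan is essentially the paper's: reduce to the MMD framework of \citet{arbel2019maximum}, verify the kernel regularity hypotheses needed there, and invoke their existence/uniqueness and noisy Euler convergence results (Propositions~1 and~8) as black boxes. Your derivation of the $L$-Lipschitz gradient constant via chain rule on $k_\theta$ and integration over $\Omega_\theta$ matches the paper's argument for \eqref{eqn:cond_kern1} exactly.

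The one place your proposal is imprecise is the origin of $\lambda$. You describe it as arising from a bound on $\mathbb{E}\|\nabla_x k(\cdot,X)\|^2$ together with a Stein/Poincar\'e-type inequality. In the paper (and in \citet{arbel2019maximum}) the relevant condition is instead an RKHS-norm Lipschitz bound on the feature-map derivatives,
\[
\sum_{i=1}^d \big\|\partial_i k(x,\cdot)-\partial_i k(y,\cdot)\big\|_{\mathcal{H}_k}^2 \leq \lambda^2\|x-y\|^2,
\]
and the paper computes the left-hand side via the reproducing-kernel identity $\langle\partial_i\Phi(x),\partial_i\Phi(y)\rangle_{\mathcal{H}_k}=\partial_i\partial_{i+d}k(x,y)$ (Lemma~4.34 of \citet{steinwart2008support}), applied first to $k$ and then to each $k_\theta$. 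This reduces the RKHS norm to $\|\partial_i f_\theta(x)\psi'_x(\cdot)-\partial_i f_\theta(y)\psi'_y(\cdot)\|^2$, which is bounded by the stated $\lambda^2\|x-y\|^2/d$ after a triangle-inequality split. Your ``Stein/Poincar\'e'' framing is not how the constant enters; the noise-injection argument in \citet{arbel2019maximum} already presupposes this RKHS condition, so you should verify it directly rather than attempt to rederive the Lyapunov step from scratch.
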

%
The proof is given in the supplement. This result shows that, with sufficiently regular $f_\theta$ and $\phi_\sigma$, the noisy Euler scheme \eqref{eqn:em} can achieve the global optimum, where the convergence rate depends on the structure of $f_\theta$ and $\phi_\sigma$.

Even though Theorem~\ref{thm:main} hints the potential of the proposed gradient flow, the discretization scheme \eqref{eqn:em} is unfortunately still intractable due to the dependency of $v$ on $p_n$. In order to obtain a practical algorithm, we finally consider a \emph{particle system} that serves as an approximation to the original system \eqref{eqn:em}, and given as follows:
\begin{align}
    X^{i}_{n+1} = X^{i}_{n} + \eta v(X^{i}_n + \beta_n U^i_n, \hat{p}_n),
    \label{eq:update}
\end{align}
where $i=1,\dots,N$ denotes the particle index and $\hat{p}_n = \frac1{N} \sum_{j=1}^N \delta_{X_n^i}$ denotes the \emph{empirical distribution} of $\{X^i_n\}_{i=1}^N$. Here, the idea is to approximate $p_n$ by $\hat{p}_n$ by evolving $N$ different particles at the same time. Similar schemes have proved successful in generative modeling \cite{csimcsekli2018sliced} and Bayesian machine learning \cite{liu2016stein}. Moreover, one can further show that the particle system converges to the true system \eqref{eqn:em} with a rate of $N^{-1/2}$ \cite{durmus2018elementary,arbel2019maximum}.  

\section{Numerical Experiments}

In this section, we present our experimental results that illustrate our framework. 

\begin{figure}[t!]
    \centering
    \includegraphics[width=\columnwidth]{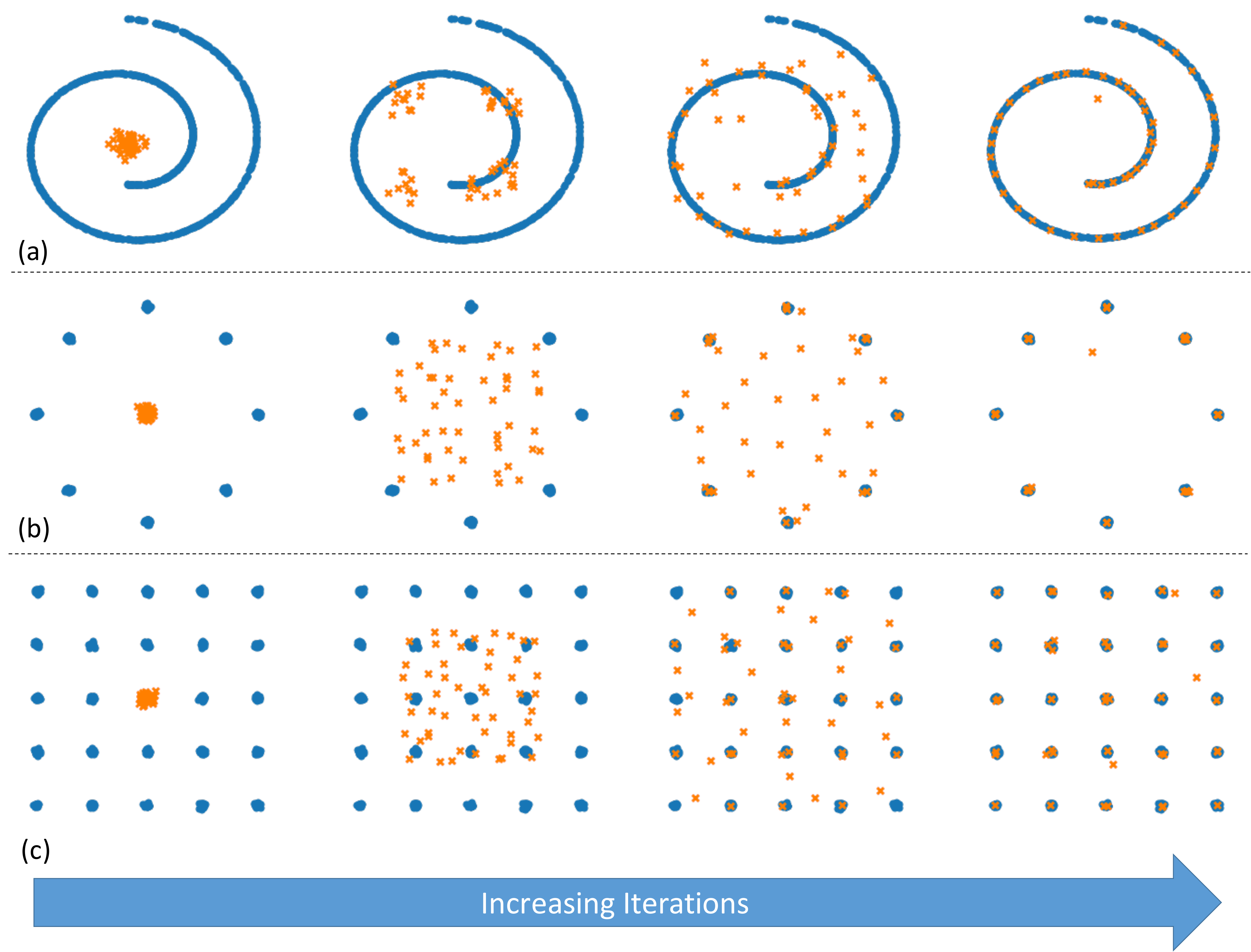}
    \caption{Gradient flow on the Swiss Roll distribution (a), the 8-Gaussians distribution (b), and the 25-Gaussians distribution (c), using the proposed GSPM-MMD kernels with $A=id(\cdot)$. The source distribution consists of $N=50$ particles.}
    \label{fig:flow}
\end{figure}
\begin{figure*}[t!]
    \centering
    \includegraphics[width=\linewidth]{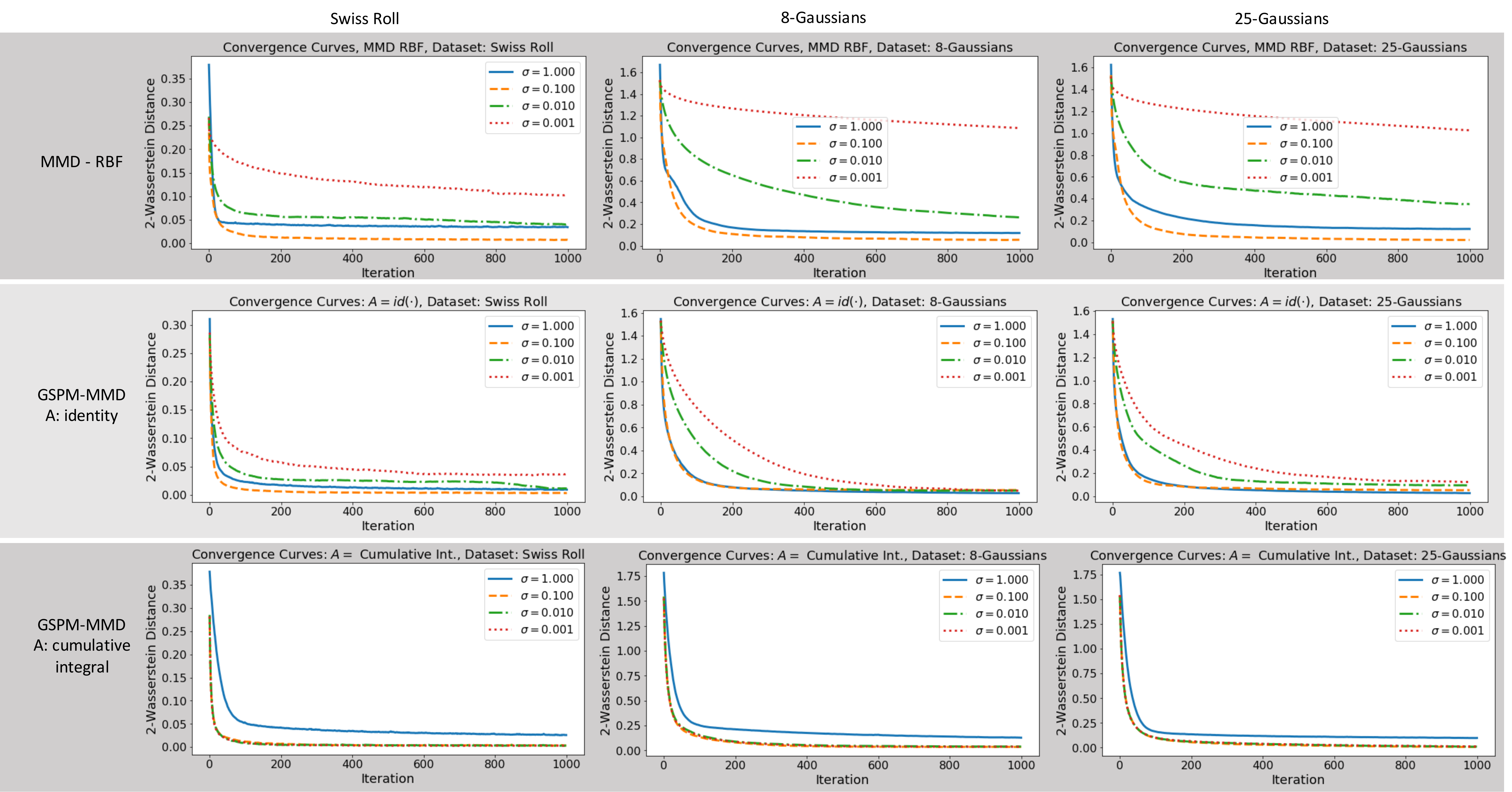}
    \vspace{-20pt}
    \caption{The convergence curves of MMD flows for MMD-RBF, GSPM-MMD with identity operator, and GSPM-MMD with the cumulative integral operator on the three synthetic datasets shown in Figure \ref{fig:flow}.}
    \vspace{-5pt}
    \label{fig:synthetic}
\end{figure*}

\subsection{Gradient flow -- Synthetic}

We first perform a numerical experiment with synthetic datasets to demonstrate the performance of the proposed GSPM-MMD kernels. To simplify the presentation, in our first experiment, we assumed the noise $\beta_n=0$ for all $n$ (i.e., the standard Euler-Maruyama discretization). We consider three two-dimensional target distributions, namely the Swiss Roll, the 8-Gaussians, and the 25-Gaussians distributions. The source distribution is initialized with $N$ samples from a Gaussian distribution.  Figure \ref{fig:flow} shows the datasets and the flow (calculated using GSPM-MMD). We calculate the gradient flow updates (See Equation \eqref{eq:update}) to match the source and the target distributions.

For our method, we used the GSPM-MMD kernel with $A=id(\cdot)$ and when $A$ is the cumulative integral operator (i.e., the 2-Cram\'{e}r distance). For simplicity, we used linear slices $f_\theta(x)=\theta\cdot x$. Also, as a standard baseline for comparison, we apply the Gaussian kernel and minimize the MMD flow. In each iteration of the gradient flow, we measure the 2-Wasserstein distance between the updated source and the target distribution. For each method we vary $\sigma\in [0.001,0.01,0.1,1]$, and repeat the experiments $10$ times. Figure \ref{fig:synthetic} compares the algorithms on the three datasets and for various $\sigma$. For the cumulative integral operator we used $L=10$ slices.

{\bf Effect of noise:} The addition of noise lessens the effect of a poor choice of $\sigma$ by allowing the particles to explore the space in a more efficient manner. 
To demonstrate the effect of the addition of noise to the updates (See Equation \eqref{eq:update}) we repeated the experiment in Figure \ref{fig:flow} for the Swiss Roll dataset, but with a poor choice of $\sigma$. From Figure \ref{fig:flow}, one can see that $\sigma=0.001$ is too small for calculating an effective flow. Hence, we chose $\sigma=0.001$ and solved a noisy gradient flow problem with the MMD-RBF kernel (baseline) and the GSPM-MMD with $A=id(\cdot)$ kernel. We selected the initial $\beta\in[1,0.1,0.01,0]$ and decayed the noise in each gradient iteration with a $1/k$ rate ($k$ being the iteration). The log 2-Wasserstein between the source and target distributions is depicted in Figure \ref{fig:noise}. As expected, addition of noise improves the overall performance of gradient flows.

{\bf Linear or non-linear slices:} So far, in our experiments, we have only used  linear slices, i.e., $f_\theta(x)=\langle x,\theta\rangle$.   Here, we compare GSPM-MMD flows solely based on the choice of linear and non-linear slices. For the non-linear slices, in this experiment, we use the homogeneous polynomials of degree 5 (see Figure \ref{fig:hypersurfaces}). To ensure a fair comparison, we chose the number of random slices for both GSPM-MMD kernels to $L=1$. Figure \ref{fig:slices} shows the comparison between linear and polynomial slices. 

\begin{figure}[t]
    \centering
    \includegraphics[width=\columnwidth]{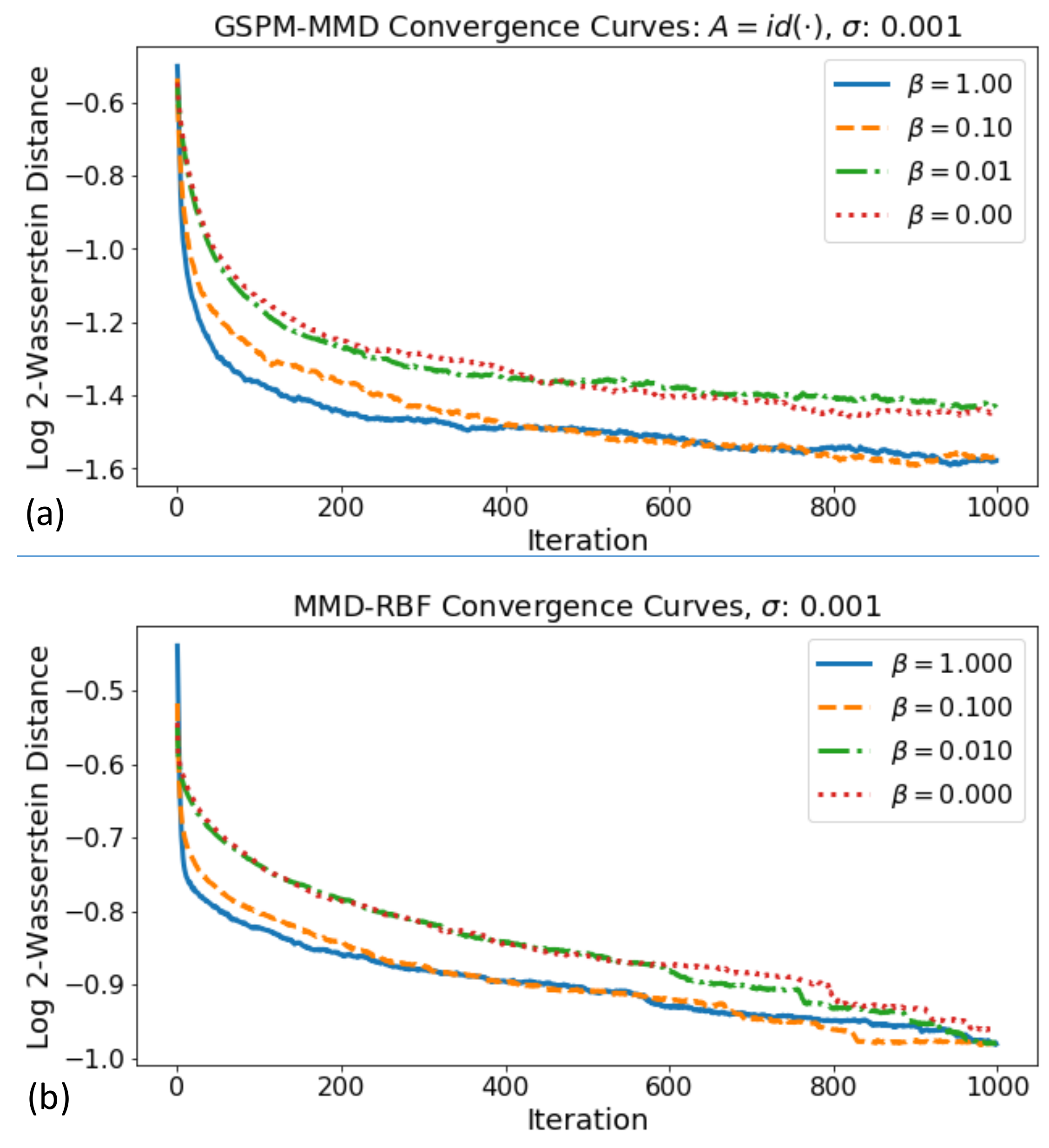}
    \vspace{-20pt}
    \caption{The effect of the addition of Gaussian noise (See Equation \eqref{eq:update}) in calculating the MMD flows using the GSPM-MMD kernel with $A=id(\cdot)$ (a), and the RBF kernel (b). The results are averaged over 10 runs and are calculated on the Swiss Roll dataset.}
    \label{fig:noise}
\end{figure}
\begin{figure}
    \centering
    \includegraphics[width=\columnwidth]{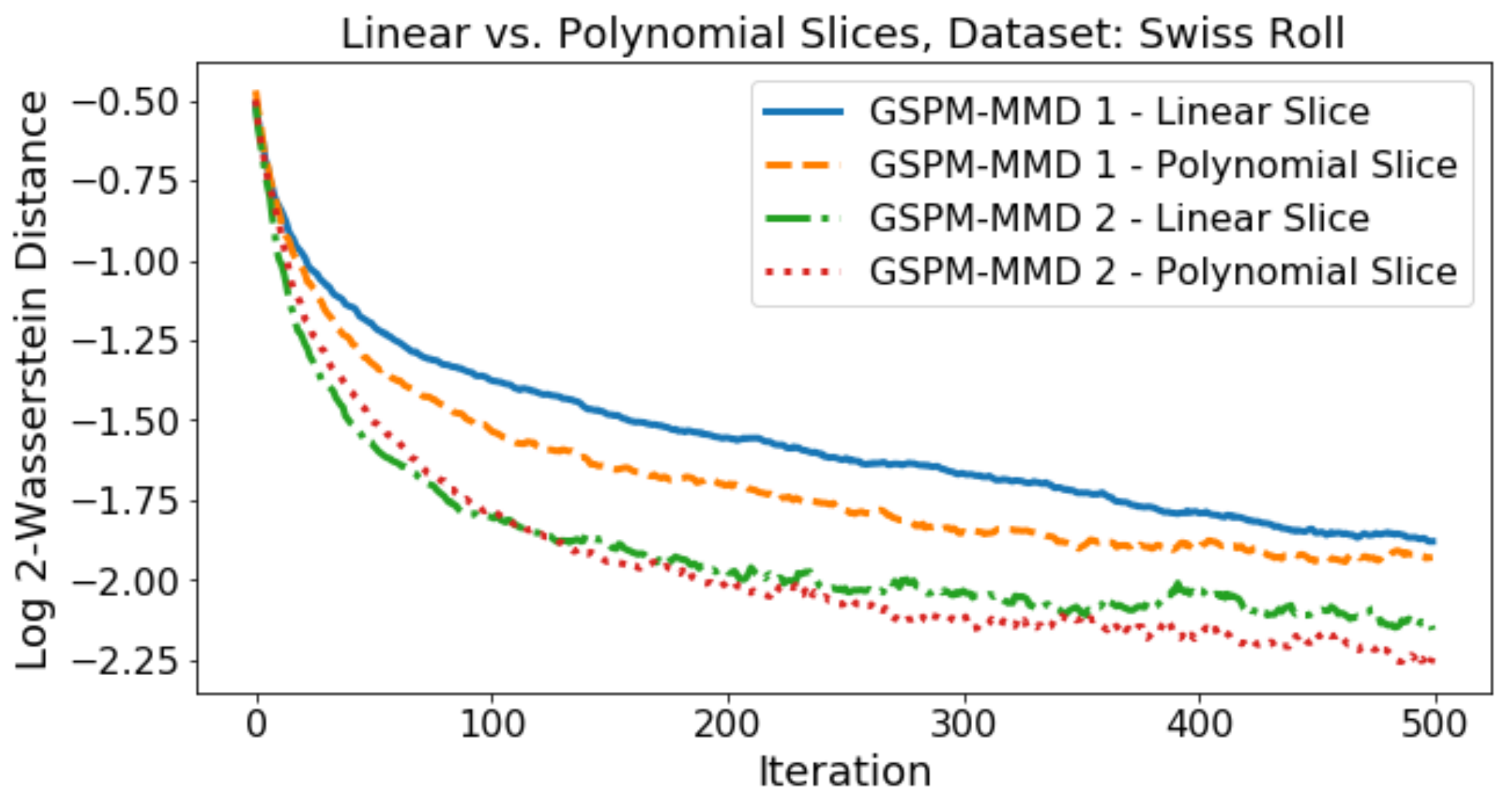}
    \vspace{-20pt}
    \caption{Gradient flows using GSPM-MMDs with linear and Polynomial slices. The experiment was calculated based on one random slice (linear or nonlinear) per iteration.}
    \label{fig:slices}
\end{figure}

\subsection{Gradient flow -- MNIST}

To show the effectiveness of the proposed distances in higher dimensions, we designed the following experiment. We first learn a simple convolutional auto-encoder, with an added classifier on its bottleneck to ensure a discriminative space embedding, to embed the MNIST dataset into a $(d=16)$-dimensional space. Then we solve the gradient flow problem in the embedded space with $N=100$ particles initialized from a Gaussian distribution.

Similar to the previous experiments, we use MMD-RBF, GSPM-MMD with $A=id(\cdot)$ (denoted as GSPM-MMD 1), and GSPM-MMD with $A$ being the cumulative integral (denoted as GSPM-MMD 2) and calculate the flow between the source and target distributions. We measure the 2-Wasserstein distance between the distributions at each iteration. The experiments were repeated $10$ times and the average performance for each method is reported in Figure \ref{fig:MNIST} (top row). After the convergence, we sort the particles according to the output of the classifier and feed them to the decoder network to visualize the corresponding digits for each method (See the bottom row in Figure \ref{fig:MNIST}). We note that same $\sigma$ was used for all three methods, and linear slicing was used in this experiment. We conclude that the GSPM-MMD with the cumulative integral operator, which corresponds to the sliced-Cram\'{e}r distance, seems to achieve a superior performance in comparison with the other two kernels.

\section{Conclusion}

We introduced a new family of distances, denoted as Generalized Sliced Probability Metrics (GSPMs), which calculate the expected distances between slices (i.e., one-dimensional marginals) of two input distributions. We then showed that a subset of the proposed distances is equivalent to the squared maximum mean discrepancy (MMD) with new kernels introduced in this work, denoted as GSPM-MMD kernels. Furthermore, we applied the GSPM-MMD kernels in the domain of gradient flows for implicit generative modeling, which has recently attracted ample attention from the research community. More importantly, we identified sufficient regularity conditions on the building elements of our proposed distance (and consequently the proposed kernels) for guaranteeing global convergence of the gradient flow. Finally, we provide extensive ablation experiments to test our proposed distance on synthetic and real datasets. 
 \begin{figure}[t!]
    \centering
    \includegraphics[width=\columnwidth]{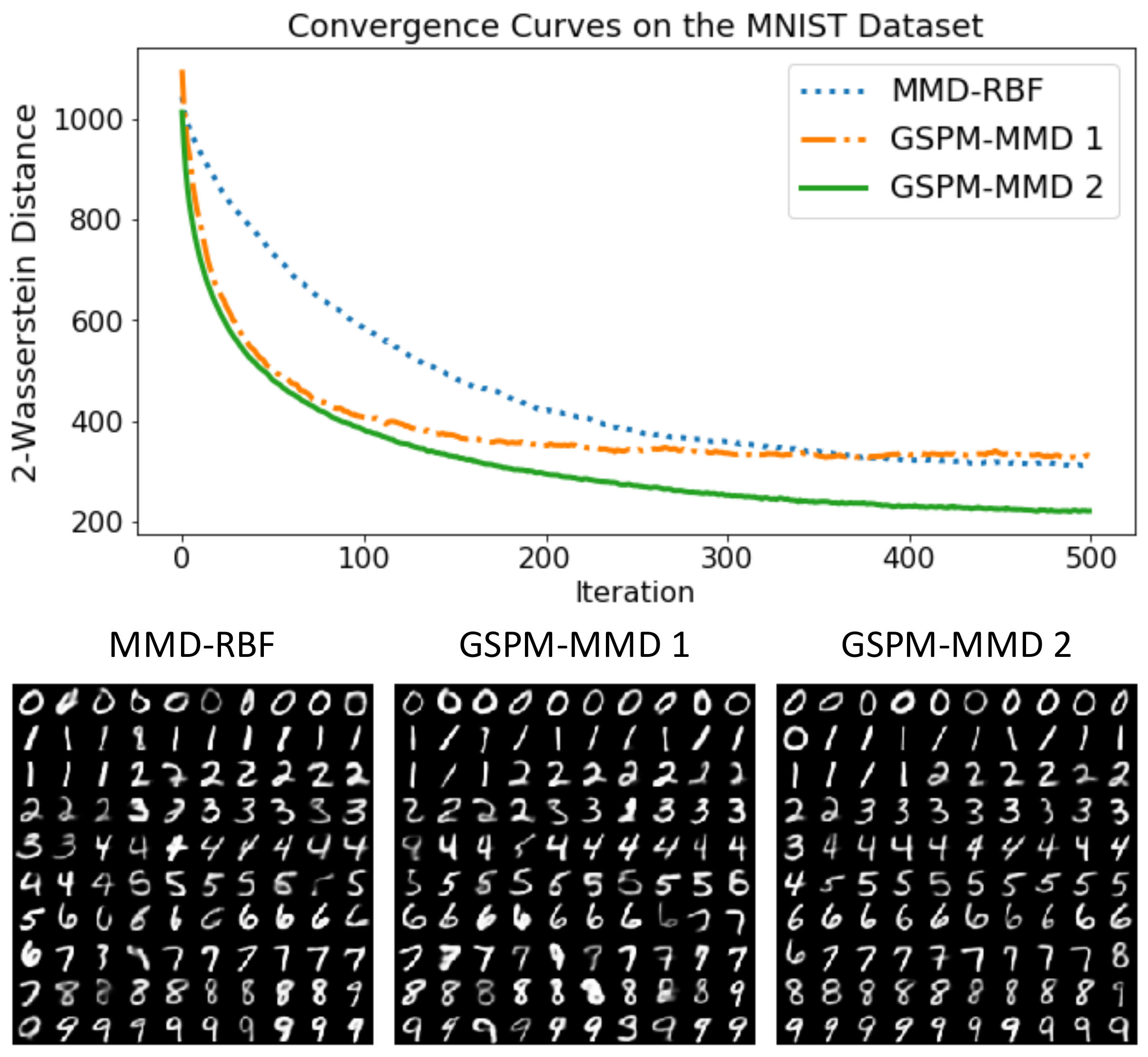}
    \caption{Comparison of the proposed kernels on the MNIST dataset. The top row shows the 2-Wasserstein distance between the source and target distributions, while the bottom row visualizes the actual target particles, $N=100$.}
    \label{fig:MNIST}
\end{figure}

\bibliographystyle{icml2020}
\bibliography{GSD_PD.bib}

\clearpage
\section*{Supplementary Material}
We invoke the following lemma from \cite{steinwart2008support} to prove our result. 
\begin{lemma}\label{L2}(Lemma 4.34 in \cite{steinwart2008support})
Let $\cX\subset \mathbb{R}^d$ be an
open subset, $k$ be a kernel on $\cX$, $\cH_k$ be a feature space of $k$, and $\Phi: \cX \to \cH_k$ be a feature map of $k$. Let $i\in \{1,\ldots,d\}$ be an index such that the mixed partial derivative $\partial_i\partial_{i+d}k$ of $k$ with respect to the coordinates $i$ and $i+d$ exists and is continuous. Then the partial derivative $\partial_i\Phi$ with respect to the $i$-th coordinate exists, is continuous, and for all $x,x'\in \cX$ we have
\begin{align*}
    \langle\partial_i \Phi(x),\partial_i \Phi(y)\rangle_{\mathcal{H}_k}=\partial_i\partial_{i+d}k(x,y)=\partial_{i+d}\partial_i k(x,y).
\end{align*} 
\end{lemma}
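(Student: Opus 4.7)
The plan is to use the reproducing property $k(u,v)=\langle\Phi(u),\Phi(v)\rangle_{\cH_k}$ to translate every statement about the abstract feature map $\Phi$ into a statement purely about the explicit kernel $k$, and then invoke the hypothesis that $\partial_i\partial_{i+d}k$ exists and is continuous. The argument proceeds in three conceptual steps: first prove that the difference quotient $\Delta_h(x):=h^{-1}(\Phi(x+he_i)-\Phi(x))\in\cH_k$ is Cauchy as $h\to 0$ and call its limit $\partial_i\Phi(x)$; second read off the inner-product identity by continuity of $\langle\cdot,\cdot\rangle_{\cH_k}$; and third deduce continuity of $\partial_i\Phi$ from continuity of $\partial_i\partial_{i+d}k$.

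For the first step, I expand $\|\Delta_{h_1}(x)-\Delta_{h_2}(x)\|_{\cH_k}^{2}$ with the reproducing property to obtain a linear combination of mixed second-order difference quotients of $k$. The cross term has the form $(h_1 h_2)^{-1}[k(x+h_1 e_i,x+h_2 e_i)-k(x+h_1 e_i,x)-k(x,x+h_2 e_i)+k(x,x)]$, and the two diagonal terms are its analogues with $(h_1,h_1)$ and $(h_2,h_2)$ in place of $(h_1,h_2)$. By the hypothesis, each of these three double-difference quotients converges to $\partial_i\partial_{i+d}k(x,x)$ as $h_1,h_2\to 0$, and the signs $(+1,-2,+1)$ force cancellation in the limit. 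This yields the Cauchy property and hence the existence of $\partial_i\Phi(x)$ as an element of $\cH_k$.

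For the second step, continuity of the inner product in $\cH_k$ gives
\begin{align*}
\langle\partial_i\Phi(x),\partial_i\Phi(y)\rangle_{\cH_k}=\lim_{h,h'\to 0}\langle\Delta_h(x),\Delta_{h'}(y)\rangle_{\cH_k},
\end{align*}
and the bracket on the right is exactly the mixed second-order difference quotient of $k$ at $(x,y)$ in the $i$-th and $(i+d)$-th coordinates, whose limit is $\partial_i\partial_{i+d}k(x,y)$ by hypothesis. The equality $\partial_i\partial_{i+d}k=\partial_{i+d}\partial_i k$ then follows from the classical Schwarz (Clairaut) theorem since the mixed partial is continuous. For the third step, applying the identity just derived gives
\begin{align*}
\|\partial_i\Phi(x)-\partial_i\Phi(y)\|_{\cH_k}^{2}=\partial_i\partial_{i+d}k(x,x)-2\,\partial_i\partial_{i+d}k(x,y)+\partial_i\partial_{i+d}k(y,y),
\end{align*}
whose right-hand side tends to $0$ as $y\to x$ by continuity of $\partial_i\partial_{i+d}k$, establishing continuity of $\partial_i\Phi$.

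The main obstacle is making the first step rigorous: the three double-difference quotients must converge to their common limit $\partial_i\partial_{i+d}k(x,x)$ at compatible rates, so that the leading-order terms genuinely cancel when $h_1$ and $h_2$ are sent to $0$ independently (and not along a common path). The cleanest route is to represent each double-difference quotient as an iterated integral of $\partial_i\partial_{i+d}k$ over a small rectangle via the fundamental theorem of calculus applied twice (valid under the stated existence of the mixed partial on a neighborhood of $(x,x)$), and then use continuity of $\partial_i\partial_{i+d}k$ at $(x,x)$ to pass to the limit uniformly. Once this uniform control is in hand, the remainder of the proof reduces to bookkeeping with the reproducing property.
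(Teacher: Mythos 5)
The paper does not prove this statement at all: it is quoted verbatim as Lemma~4.34 from Steinwart and Christmann's \emph{Support Vector Machines} \cite{steinwart2008support}, so there is no in-paper argument to compare against. Your blind reconstruction is nonetheless correct, and it follows the standard route that the cited reference itself takes: expand $\|\Delta_{h_1}(x)-\Delta_{h_2}(x)\|_{\cH_k}^2$ by the reproducing property, recognize each of the three resulting pieces as a mixed second-order difference quotient of $k$, and use the existence and continuity of $\partial_i\partial_{i+d}k$ to force the $(+1,-2,+1)$ cancellation in the limit. You correctly identify the genuine subtlety---that the three double-difference quotients must converge to $\partial_i\partial_{i+d}k(x,x)$ uniformly in $(h_1,h_2)$, not merely along a common path---and your fix via the iterated fundamental-theorem-of-calculus representation over a small rectangle, together with continuity of the mixed partial at $(x,x)$, is exactly the right control. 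The continuity of $\partial_i\Phi$ via $\|\partial_i\Phi(x)-\partial_i\Phi(y)\|^2 = \partial_i\partial_{i+d}k(x,x)-2\partial_i\partial_{i+d}k(x,y)+\partial_i\partial_{i+d}k(y,y)\to 0$ is likewise clean.

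One small refinement: the final equality $\partial_i\partial_{i+d}k = \partial_{i+d}\partial_i k$ does not need Schwarz's theorem (whose hypotheses---existence of $\partial_i k$ on a neighborhood---you would have to verify separately). It comes for free from the symmetry of a real kernel together with the symmetry of the inner product. Applying your identity at $(y,x)$ gives $\langle\partial_i\Phi(y),\partial_i\Phi(x)\rangle_{\cH_k} = \partial_i\partial_{i+d}k(y,x)$; symmetry of the inner product equates this with $\langle\partial_i\Phi(x),\partial_i\Phi(y)\rangle_{\cH_k} = \partial_i\partial_{i+d}k(x,y)$; and differentiating the identity $k(x,y)=k(y,x)$ in the $(i+d)$-th and $i$-th coordinates shows $\partial_i\partial_{i+d}k(y,x) = \partial_{i+d}\partial_i k(x,y)$. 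Chaining these gives the claimed symmetry of the mixed partial without invoking Clairaut.
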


\begin{lemma}\label{L3}
Let us define $\psi_x(\cdot)\triangleq A\phi_\sigma(\cdot-f_\theta(x))$ such that $k_\theta$ in \eqref{eq:k_theta} can be represented as follows
$$
k_\theta(x,y)=\langle\psi_x(\cdot),\psi_y(\cdot)\rangle
$$
for any $\theta \in \Omega_\theta$. Then, under Conditions~\ref{cond:c1}-\ref{cond:c3}, we have that
\begin{itemize}
    \item $\|\psi_x(\cdot)\|\leq \|A\|_{op}G_\phi$.
    \item $\|\psi'_x(\cdot)\|\leq \|A\|_{op}G_\phi$.
    \item $\|\psi_x(\cdot)-\psi_y(\cdot)\|\leq \|A\|_{op}G_\phi G_f\|x-y\|$.
    \item $\|\psi'_x(\cdot)-\psi'_y(\cdot)\|\leq \|A\|_{op}G_\phi G_f\|x-y\|$.
\end{itemize}
\end{lemma}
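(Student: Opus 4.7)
The plan is to verify the four inequalities as direct consequences of the three Conditions, with each bound reducing to a composition of the operator bound $\|A\|_{op}$ from Condition~\ref{cond:c1}, the scalar bounds and Lipschitz moduli on $\phi_\sigma$ and $\phi'_\sigma$ from Condition~\ref{cond:c3}, and the Lipschitz constant $G_f$ for $f_\theta$ implied by the gradient bound in Condition~\ref{cond:c2}. Throughout, I will read $\psi'_x$ as the derivative of the scalar function $t\mapsto \psi_x(t) = [A\phi_\sigma(\cdot - f_\theta(x))](t)$ with respect to its argument $t$, and $\|\cdot\|$ as the Hilbert-space norm underlying the inner product in \eqref{eq:k_theta}.

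For the first two bounds, I would begin by invoking operator boundedness: $\|\psi_x\|=\|A\phi_\sigma(\cdot - f_\theta(x))\|\leq \|A\|_{op}\|\phi_\sigma(\cdot - f_\theta(x))\|\leq \|A\|_{op}G_\phi$, where the last step uses $|\phi_\sigma|\leq G_\phi$ from Condition~\ref{cond:c3}. Because $A$ acts only on the variable $t$ and commutes with translation in $t$, differentiation in $t$ commutes with $A$, giving $\psi'_x=A\phi'_\sigma(\cdot - f_\theta(x))$; the same argument with $|\phi'_\sigma|\leq G_\phi$ then yields the second bound.

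For the last two bounds, I would use linearity of $A$ to write $\psi_x-\psi_y = A\bigl[\phi_\sigma(\cdot - f_\theta(x)) - \phi_\sigma(\cdot - f_\theta(y))\bigr]$, pull out $\|A\|_{op}$, and then apply the pointwise Lipschitz bound from Condition~\ref{cond:c3}: $|\phi_\sigma(t - f_\theta(x))-\phi_\sigma(t - f_\theta(y))|\leq G_\phi|f_\theta(x)-f_\theta(y)|$. Condition~\ref{cond:c2}, via the mean value inequality applied along the segment from $x$ to $y$, then gives $|f_\theta(x) - f_\theta(y)|\leq G_f\|x-y\|$. Combining produces the required $\|A\|_{op}G_\phi G_f\|x-y\|$ bound. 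The fourth inequality is identical in form but invokes the Lipschitz property of $\phi'_\sigma$ (also supplied by Condition~\ref{cond:c3}) in place of that of $\phi_\sigma$, together with the same translation-commutation argument used in the second bound.

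The main obstacle, and one I would make explicit, is interpretive rather than computational: the bounds in Condition~\ref{cond:c3} are stated pointwise in $t$, whereas the bounds in the lemma are in the Hilbert-space norm associated with $\langle\cdot,\cdot\rangle$. For the examples studied in the paper (in particular the Cram\'er example, where the integration domain is restricted to $[-T,T]$), a pointwise sup-norm bound immediately yields a Hilbert-norm bound up to a factor that can be absorbed into $G_\phi$. I therefore read Condition~\ref{cond:c3} as simultaneously controlling $\phi_\sigma$ and $\phi'_\sigma$ in the sup-norm and in the relevant Hilbert norm, which is the natural reading given that $\psi_x$ is constructed from $\phi_\sigma$ by the operator $A$. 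With this convention settled, the four bounds are direct consequences of operator boundedness, the chain rule on the composition $t\mapsto\phi_\sigma(t-f_\theta(x))$, and the scalar Lipschitz estimates.
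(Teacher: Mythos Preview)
Your proposal is correct and follows the same approach as the paper, whose proof consists of the single sentence ``The proof of statements above follows immediately from Conditions~\ref{cond:c1}--\ref{cond:c3}.'' Your write-up is in fact more careful than the paper's: the interpretive point you raise about pointwise versus Hilbert-norm bounds in Condition~\ref{cond:c3}, and the implicit assumption that $A$ commutes with translation in $t$ (so that $\psi'_x=A\phi'_\sigma(\cdot-f_\theta(x))$), are both glossed over in the original.
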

\begin{proof}
The proof of statements above follows immediately from Conditions~\ref{cond:c1}-\ref{cond:c3}.
\end{proof}

\subsection*{Proof of Theorem \ref{thm:main}}

We prove that the kernel $k$ in \eqref{eq:kernel} has $L$-Lipschitz gradients:
\begin{align}
    \|\nabla k(x,x') - \nabla k(y,y')\| \leq L( \|x-y\| + \|x'-y'\|),\label{eqn:cond_kern1}
\end{align}
and satisfies the following inequality:
\begin{align}
    \sum_{i=1}^d \|\partial_i k(x,\cdot) -\partial_i k(y,\cdot) \|^2_{\mathcal{H} } \leq \lambda^2 \|  x-y\|^2. \label{eqn:cond_kern2}
\end{align}
Then the rest of the proof follows from  \cite{arbel2019maximum}, Proposition 1 (existence and uniqueness) and Proposition 8 (convergence of the Euler scheme).

Recalling the definition of $k_\theta$ from Lemma \ref{L3}, we have that 
\begin{align*}
    \nabla_xk_\theta(x,x')&-\nabla_xk_\theta(x,y')=\nabla_x\langle \psi_x(\cdot),\psi_{x'}(\cdot)-\psi_{y'}(\cdot)\rangle\\
    &=\nabla_xf_\theta(x)\langle \psi'_x(\cdot),\psi_{x'}(\cdot)-\psi_{y'}(\cdot)\rangle.
\end{align*}
Applying Lemma \ref{L3}, we can simplify above to get
\begin{align*}
  &\|\nabla_xk_\theta(x,x')-\nabla_xk_\theta(x,y')\|\\
  &~~~~~~~~~~~~~~\leq G_f\|\psi'_x(\cdot)\|\|\psi_{x'}(\cdot)-\psi_{y'}(\cdot)\|\\
  &~~~~~~~~~~~~~~\leq G^2_fG^2_\phi\|A\|^2_{op}\|x'-y'\|. \numberthis \label{eq:12}
\end{align*}
On the other hand, 
\begin{align*}
    &\nabla_xk_\theta(x,y')-\nabla_yk_\theta(y,y')\\
    &=\nabla_xf_\theta(x)\langle \psi'_x(\cdot),\psi_{y'}(\cdot)\rangle-\nabla_yf_\theta(y)\langle \psi'_y(\cdot),\psi_{y'}(\cdot)\rangle\\
    &=\nabla_xf_\theta(x)\langle \psi'_x(\cdot),\psi_{y'}(\cdot)\rangle-\nabla_yf_\theta(y)\langle \psi'_x(\cdot),\psi_{y'}(\cdot)\rangle\\
    &+\nabla_yf_\theta(y)\langle \psi'_x(\cdot),\psi_{y'}(\cdot)\rangle-\nabla_yf_\theta(y)\langle \psi'_y(\cdot),\psi_{y'}(\cdot)\rangle\rangle. 
\end{align*}
Due to Lipschitz continuity of $\nabla f_\theta$ as well as Lemma \ref{L3}, the above entails that
\begin{align*}
    &\|\nabla_xk_\theta(x,y')-\nabla_yk_\theta(y,y')\|\\
    &~~~~~~~~\leq \|\langle \psi'_x(\cdot),\psi_{y'}(\cdot)\rangle\|\|\nabla_xf_\theta(x)-\nabla_yf_\theta(y)\|\\
    &~~~~~~~~+\|\nabla_yf_\theta(y)\|\|\psi_{y'}(\cdot)\|\|\psi'_x(\cdot)-\psi'_y(\cdot)\|\\
    &~~~~~~~~\leq G_fG^2_\phi\|A\|^2_{op}\|x-y\|+G^2_fG^2_\phi\|A\|^2_{op}\|x-y\|
\end{align*}
Combining above with \eqref{eq:12}, we have by triangle inequality that
\begin{align*}
&\|\nabla_xk_\theta(x,x')-\nabla_yk_\theta(y,y')\|\\
&~~~~~~~~~~\leq(G^2_f+G_f)G^2_\phi\|A\|^2_{op}(\|x-y\|+\|x'-y'\|).
\end{align*}
Integrating above over $\Omega_\theta$ and interchanging the integral with the norm on the left-hand-side proves Condition \ref{eqn:cond_kern1} with $L=(G^2_f+G_f)G^2_\phi\|A\|^2_{op}$.

To prove that \eqref{eqn:cond_kern2} holds, we can use Lemma \ref{L2} to observe that
\begin{align*}
    \vphantom{\Big\|a\Big\|^2_a}&\Big\|\partial_i k(x,\cdot) -\partial_i k(y,\cdot) \Big\|^2_{\mathcal{H}_k}\\
    \vphantom{\Big\|a\Big\|^2_a}&~~~~~~=\partial_i\partial_{i+d} k(x,x)+\partial_i\partial_{i+d}k(y,y)-2\partial_i\partial_{i+d} k(x,y)\\
    \vphantom{\Big\|a\Big\|^2_a}&~~~~~~=\int_{\Omega_\theta} \partial_i\partial_{i+d}k_\theta(x,x)d\theta+\int_{\Omega_\theta} \partial_i\partial_{i+d}k_\theta(y,y)d\theta\\
    \vphantom{\Big\|a\Big\|^2_a}&~~~~~~-2\int_{\Omega_\theta} \partial_i\partial_{i+d}k_\theta(x,y)d\theta\\
    \vphantom{\Big\|a\Big\|^2_a}&~~~~~~=\int_{\Omega_\theta}\Big\|\partial_i k_\theta(x,\cdot) -\partial_i k_\theta(y,\cdot) \Big\|^2_{\mathcal{H}_{{k}_{\theta}}}d\theta.\numberthis \label{eq:11} 
\end{align*} 
We now have    
\begin{align*}
    \vphantom{\Big\|a\Big\|^2_a}&\Big\|\partial_i k_\theta(x,\cdot) -\partial_i k_\theta(y,\cdot) \Big\|^2_{\mathcal{H}_{{k}_{\theta}}}\\
    \vphantom{\Big\|a\Big\|^2_a}&~~~~~~=\partial_i\partial_{i+d}k_\theta(x,x)+ \partial_i\partial_{i+d}k_\theta(y,y)-2\partial_i\partial_{i+d}k_\theta(x,y)\\
    \vphantom{\Big\|a\Big\|^2_a}&~~~~~~=\langle\partial_if_\theta(x)\psi'_x(\cdot),\partial_if_\theta(x)\psi'_x(\cdot)\rangle\\
    \vphantom{\Big\|a\Big\|^2_a}&~~~~~~+\langle\partial_if_\theta(y)\psi'_y(\cdot),\partial_if_\theta(y)\psi'_y(\cdot)\rangle\\
    \vphantom{\Big\|a\Big\|^2_a}&~~~~~~-2\langle\partial_if_\theta(x)\psi'_x(\cdot),\partial_if_\theta(y)\psi'_y(\cdot)\rangle\\
    \vphantom{\Big\|a\Big\|^2_a}&~~~~~~=\Big\|\partial_if_\theta(x)\psi'_x(\cdot)-\partial_if_\theta(y)\psi'_y(\cdot)\Big\|^2\\
    \vphantom{\Big\|a\Big\|^2_a}&~~~~~~\leq2\Big\|\partial_if_\theta(x)\psi'_x(\cdot)-\partial_if_\theta(y)\psi'_x(\cdot)\Big\|^2\\
  \vphantom{\Big\|a\Big\|^2_a}    \vphantom{\Big\|a\Big\|^2_a}&~~~~~~+2\Big\|\partial_if_\theta(y)\psi'_x(\cdot)-\partial_if_\theta(y)\psi'_y(\cdot)\Big\|^2\\
    \vphantom{\Big\|a\Big\|^2_a}&~~~~~~\leq2\|A\|^2_{op}G^2_\phi\Big|\partial_if_\theta(x)-\partial_if_\theta(y)\Big|^2\\
    \vphantom{\Big\|a\Big\|^2_a}&~~~~~~+2G^2_f\big\|\psi'_x(\cdot)-\psi'_y(\cdot)\big\|^2\\
  \vphantom{\Big\|a\Big\|^2_a} &~~~~~~\leq2\|A\|^2_{op}G^2_\phi G^2_f(1+G^2_f)\big\|x-y\big\|^2.
\end{align*}
Integrating above uniformly over $\Omega_\theta$, substituting it in \eqref{eq:11}, and summing over $i$ proves that \eqref{eqn:cond_kern2} holds with  $\lambda^2=2d\|A\|^2_{op}G^2_\phi G^2_f(1+G^2_f)$.

\subsection{$k_\theta(x_i,x_j)$ for the smooth step function.}

Here we derive the analytical form of $k_\theta$ for the zero'th order smooth step function, which is essentially the clamping function. Note that similar analytical formulas could be derived for higher order smooth step functions. For simplicity we let $g_i:=A\phi_\sigma(\cdot-f(x_i))$, and without loss of generality, we assume that $f(x_i)\leq f(x_j)$. For the zero'th order smooth step function, assuming bounded range $[-T,T]$, we have:
\begin{eqnarray}
A\phi_\sigma(x)=\left\{\begin{array}{lr}
0 & -T<x\leq -\sigma\\
\frac{x+\sigma}{2\sigma} & |x|<\sigma\\
1 &  \sigma\leq x <T
\end{array}\right.
\end{eqnarray}
Then, we have $k_\theta(x_i,x_j)=\langle g_i, g_j \rangle$. We also assume that $max_x f(x)\leq T-\sigma$.

{\bf If $f(x_i)+\sigma \leq f(x_j)-\sigma$:} In this case we have 
\begin{eqnarray*}
k_\theta(x_i,x_j)&=&\int_{-T}^T g_i(t)g_j(t)dt \\
&=& \int_{f(x_j)-\sigma}^T g_j(t)dt \\
&=& \int_{f(x_j)-\sigma}^{f(x_j)+\sigma} \frac{t-f(x_j)+\sigma}{2\sigma} dt+\int_{f(x_j)+\sigma}^T 1 dt \\
&=& \sigma+T-f(x_j)-\sigma = T-f(x_j)
\end{eqnarray*}

{\bf If $f(x_j)-\sigma < f(x_i)+\sigma$:} For simplicity let $f_i:=f(x_i)$, then we have:
\begin{eqnarray*}
k_\theta(x_i,x_j)&=&\int_{-T}^T g_i(t)g_j(t)dt \\
&=& \frac{1}{4\sigma^2}\int_{f_j-\sigma}^{f_i+\sigma} (t+\sigma-f_i)(t+\sigma-f_j)dt+\\
&& \frac{1}{2\sigma}\int_{f_i+\sigma}^{f_j+\sigma} (t+\sigma-f_i)dt +\int_{f_j+\sigma}^T 1 dt 
\end{eqnarray*}
where,
\begin{eqnarray*}
\frac{1}{4\sigma^2}\int_{f_j-\sigma}^{f_i+\sigma} (t+\sigma-f_i)(t+\sigma-f_j)dt=\\ \frac{(f_j-f_i)^3-12\sigma^2(f_j-f_i)+16\sigma^3}{24\sigma^2},
\end{eqnarray*}
and,
\begin{eqnarray*}
\frac{1}{2\sigma}\int_{f_i+\sigma}^{f_j+\sigma} (t+\sigma-f_i)dt = \\
\frac{(f_j-f_i)(f_i-f_j+4\sigma)}{4\sigma}
\end{eqnarray*}
and,
\begin{eqnarray*}
\int_{f_j+\sigma}^T 1 dt =T-f_j-\sigma 
\end{eqnarray*}
remember that $f_i\leq f_j$. Then, we can write:
$$
k_\theta(x_i,x_j)=T-f(x_j)+\frac{(f(x_j)-f(x_i)-2\sigma)^3}{24\sigma^2}
$$
Putting it all together:
\begin{eqnarray*}
&&k_\theta(x_i,x_j)=\\ &&\left\{\begin{array}{lr}
    T-f(x_j) &  f(x_i)\leq f(x_j)-2\sigma \\
    T-f(x_j)+\frac{(f(x_j)-f(x_i)-2\sigma)^3}{24\sigma^2} & O.W.  
\end{array}    \right.
\end{eqnarray*}

\end{document}